\documentclass[11pt]{article}

\usepackage[utf8]{inputenc}
\usepackage[T1]{fontenc}
\usepackage{lmodern}
\usepackage{tikz}
\usepackage{pdfpages}
\usetikzlibrary{calc,shapes.geometric,arrows,positioning}
\usepackage{amsmath,amssymb,amsthm,mathtools}
\usepackage{bm}
\usepackage{bbm}
\usepackage{natbib}
\usepackage{geometry}
\usepackage{graphicx}
\usepackage{subcaption}
\usepackage{hyperref}
\usepackage{booktabs}
\hypersetup{
    colorlinks=true,
    linkcolor=blue,
    citecolor=blue,
    urlcolor=blue
}
\usepackage{algorithm}
\usepackage{enumitem}
\usepackage{algorithmic}
\usepackage{amssymb}
\usepackage{latexsym}
\usepackage{tikz}
\usepackage{algorithm}
\usepackage{algorithmic}
\usepackage{subcaption}
\usepackage{amsmath}
\usepackage{amsthm}
\usepackage{bm}
\usepackage{comment}
\usepackage[table]{xcolor}

\newtheorem{lemma}{Lemma}

\theoremstyle{definition}
\newtheorem{definition}{Definition}

\newtheorem{prop}{Proposition}

\begin{document}
\title{Geometric Domain Adaptation via Optimal Transport for Linear Regression in $\mathbb{R}^2$}

\author{
Brian Britos$^{1}$\url{britsimm27@gmail.com}
\and
Mathias Bourel$^{1,2}$ \url{mathias.bourel@fcea.edu.uy}}

\date{}

\maketitle

\begin{center}
$^{1}$ IESTA, Facultad de Ciencias Económicas y de Administración,
Universidad de la República, Montevideo, Uruguay\\
$^{2}$ IRL-2030, Instituto Franco-Uruguayo de Matemática e Interacciones (IFUMI)

\end{center}

\begin{abstract}
Optimal Transport has become recently a powerful method for domain adaptation by aligning source and target distributions. We study a supervised domain adaptation problem where source and target domains are related by a rotation or a translation or a homothety in $\mathbb{R}^2$. We prove that the optimal transport map recovers the underlying map when using a $p-$norm cost with $p \geq 2$. Based on this insight, we develop a method combining $K-$means and optimal transport to estimate the underlying map, enabling adaptation of linear regression models when target data is scarce. Simulations demonstrate improved performance over baseline methods. Rather than relying on highly expressive deep learning architectures, we focus on classical machine learning models to emphasize interpretability and theoretical insight. This perspective allows us to explicitly characterize the role of optimal transport in recovering geometric transformations such as rotations, translations, and homotheties.

Our contributions include a theoretical result linking optimal transport and rotations, translations and homothecies in $\mathbb{R}^2$, and a practical method for adaptation in linear regression offering both conceptual clarity and applied value in domain adaptation tasks in this space.

\end{abstract}

% Preguntar
{\bf Keyword: }Optimal Transport, Domain Adaptation, Linear regression
%\linenumbers

%% main text
\section{Introduction}
Optimal Transport (OT), introduced by \cite{monge1781} and formalized by \cite{kantorovich1942}, provides a geometric framework for comparing probability measures. Given $\mu \in \mathcal{P}(X)$ and $\nu \in \mathcal{P}(Y)$ and a cost function $c : X \times Y \to \mathbb{R}$, OT seeks the most cost-efficient way to transport $\mu$ to $\nu$. Monge's original formulation looks for a deterministic map $T$ pushing $\mu$ to $\nu$, minimizing $\int_X c(x, T(x)) d\mu(x)$. Since this may lack solutions, Kantorovich relaxed the problem by searching for a coupling $\gamma \in \mathcal{P}(X \times Y)$ with marginals $\mu$ and $\nu$:
\begin{equation*}
    \min_{\gamma \in \Pi(\mu, \nu)} \int_{X \times Y} c(x, y) \, d\gamma(x, y),
\end{equation*}
where $\Pi(\mu, \nu)$ is the set of couplings with marginals $\mu$ and $\nu$, that means $\gamma \in \Pi(\mu, \nu)$ if and only if $\gamma(A \times B) = \mu(A)$ and $\gamma(A \times B)=\nu(B)$ for all measures sets $A \in X$ and $B \in Y$. This relaxation turns OT into a linear program with guaranteed solutions \citep{villani2008optimal}.

Common choices for $c$ include the Manhattan cost $||x - y||_1$, the logistic cost $- \log K(x,y)$ (where $K$ is a kernel), and the widely-used quadratic cost $||x - y||_2^2$, which enjoy plenty of theorical results such as the Brenier theorem \citep{brenier1991polar}. In the discrete setting—relevant to this work—$\mu$ and $\nu$ are empirical distributions over finite sets $\{x_i\}_{i=1}^n$, $\{y_j\}_{j=1}^m$. The OT problem becomes the minimization of $\sum_{i,j} T_{ij} c(x_i, y_j)$ subject to marginal constraints. 
\cite{peyre2019} offer a comprehensive review of this computational OT framework, which has enabled numerous applications in graphics, imaging, and machine learning. A relevant property when $\mu$ and $\nu$ have equal support size (let say $n$) and uniform weights, is that the optimal map is a permutation of the index set $\{1, \ldots, n\}$. For further background, see \cite{villani2008optimal, santambrogio2015optimal, ambrosio2013user}.

\medskip

On the other hand, Domain Adaptation (DA), a subfield of Transfer Learning, aims to transfer knowledge from a labeled source domain $\mathcal{D}^s = \{(x_i^s, y_i^s)\}_{i=1}^{n_s}$ to a target domain $\mathcal{D}^t = \{(x_j^t, y_j^t)\}_{j=1}^{n_t}$, often under distribution shift. As discussed in \cite{mansour2023domain}, a key challenge is the discrepancy between marginal distributions across domains. Classical DA techniques involve instance re-weighting, feature transformation, or regularization. Recently, OT has emerged as a powerful tool to align source and target domains geometrically. For instance, \cite{courty2016optimal} applied OT to unsupervised DA, aligning distributions without target labels—useful in scenarios with calibration issues or sampling bias. For broader perspectives, see \cite{scheck2020learning, david2010impossibility}.

\medskip
To go beyond the general DA setting, in this work we consider a supervised DA problem in $\mathbb{R}^2$ where the source and target domains are related by a geometric transformation: a rotation, translation, or homothety. We show that under mild conditions, optimal transport recovers these transformations exactly when applied to finite sets with equal cardinality. Building on this, we propose a method that combines K-means, optimal transport and simple estimators to infer the transformation and adapt a linear regression model trained on the source domain to the target one. Our simulations demonstrate the advantages of this approach, especially when the target domain has very few labeled samples.

\section{Our method} \label{subsec: R2} 
We propose estimating the rotation angle $\theta \in [0, \pi)$, the translation vector $q \in \mathbb{R}^2$, and the homothety constant $\lambda \in \mathbb{R}$ using optimal transport between discrete empirical measures. This geometric perspective enables us to adapt a linear regression model trained on the source domain so that it performs accurately on the target domain, even when the target domain contains only a small number of labeled samples.

Rather than relying on highly expressive deep learning architectures, we focus on classical machine learning models to better isolate and understand the underlying geometric mechanisms. This choice improves interpretability and provides clearer theoretical insight into how optimal transport captures transformations such as rotations, translations, and homotheties.
For simplicity in what follow, let call movement maps set to the sets of rotation, translations and homothecies, that is
\begin{definition}
We define the set of movement maps $\textbf{MOV}$ as
    \begin{equation*}
        \textbf{MOV} = \Big\{ f: \mathbb{R}^2 \rightarrow \mathbb{R}^2:\, f \text{ is a rotation, translation or homothety} \Big\}.
    \end{equation*}
\end{definition}

We begin with a result concerning how the coefficients of a linear regression changes when applying a function $f \in \textbf{MOV}$.

\begin{lemma} \label{relation_between_parameters}
If $a,b \in \mathbb{R}$ are the coefficients of a line $r$ in the plane $\mathbb{R}^2$ and $\tilde{r}$ is a line obtained from $r$ by applying a rotation of angle $\theta \in [0, \pi)$ or a translation $q=(q_1, q_2) \in \mathbb{R}^2$ or a homothety of factor $\lambda \in \mathbb{R}^+$, then the coefficients of $\tilde{r}$ are

\begin{align*}
    \text{Rotation} & \hspace{0.8 cm} \tilde{a} = \frac{a \cos\theta + \sin\theta}{\cos\theta - a \sin\theta}, \quad \tilde{b} = b (\cos\theta + a \sin\theta), \\
    \text{Translation} & \hspace{2 cm} \tilde{a}=a, \quad \tilde{b}= b + a q_2 - q_1, \\
    \text{Homothety} & \hspace{2 cm} \tilde{a} = a, \quad \tilde{b} = \frac{b}{\lambda}.
\end{align*}
\end{lemma}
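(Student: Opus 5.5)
The plan is to describe each line by an explicit equation in coefficients $(a,b)$ and then write $\tilde r$ as the image (or preimage) of $r$ under the map $f\in\textbf{MOV}$. I will fix the convention from the two easy cases first, since they determine how the paper reads the coefficients. Then I will do the rotation.

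\emph{Translation and homothety.} These two cases fix the convention. Take $r=\{(x,y): x=ay+b\}$ and let $\tilde r=f^{-1}(r)$, i.e.\ the points $p$ with $f(p)\in r$.

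\begin{itemize}
\item For $f(p)=p+q$, substituting $(x+q_1,\,y+q_2)$ into $x=ay+b$ gives $x=ay+(b+aq_2-q_1)$. This is $\tilde a=a$, $\tilde b=b+aq_2-q_1$, as stated.
\item For $f(p)=\lambda p$ with $\lambda>0$, substituting $(\lambda x,\lambda y)$ and dividing by $\lambda$ gives $\tilde a=a$, $\tilde b=b/\lambda$.
\end{itemize}

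Both are one-line substitutions. I will present them first, stating explicitly this reading of ``coefficients of $\tilde r$''.

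\emph{Rotation, slope.} For the slope I use the angle-addition argument, which is convention-robust. Write $a=\tan\alpha$. A rotation by $\theta$ sends the direction $(\cos\alpha,\sin\alpha)$ to $(\cos(\alpha+\theta),\sin(\alpha+\theta))$. The new slope is therefore
\[
\tan(\alpha+\theta)=\frac{a\cos\theta+\sin\theta}{\cos\theta-a\sin\theta},
\]
obtained by dividing numerator and denominator by $\cos\alpha$. This gives $\tilde a$ exactly, with the implicit requirement $\cos\theta\neq a\sin\theta$ (the image line is not vertical).

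\emph{Rotation, intercept: the main obstacle.} I would rotate the anchor point $(0,b)$ to $(-b\sin\theta,\,b\cos\theta)$. I would then impose that the image line through this point has slope $\tilde a$, and evaluate it at $x=0$. Done exactly, this computation yields
\[
\tilde b=\frac{b}{\cos\theta-a\sin\theta}.
\]
The stated formula $b(\cos\theta+a\sin\theta)$ is not this quantity. Their product is
\[
\cos^2\theta-a^2\sin^2\theta,
\]
which equals $1+O(\theta^2)$. So the two agree only to first order in $\theta$, since $(1-a\theta)^{-1}=1+a\theta+O(\theta^2)$.

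To prove the lemma as worded, I would therefore establish $\tilde a$ exactly. I would then obtain $\tilde b=b(\cos\theta+a\sin\theta)$ by replacing $1/(\cos\theta-a\sin\theta)$ with $\cos\theta+a\sin\theta$, and record that this identity holds to first order in $\theta$ (and exactly when $b=0$ or $\theta=0$).

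This step is where I expect the difficulty to lie. I would first try to find a reading of the intercept under which $b(\cos\theta+a\sin\theta)$ is exact, for instance the $y$-coordinate of a rotated reference point, or a pulled-back parametrization as in the other two cases. If none works, the rotation intercept can only be justified in the small-angle sense, and I would flag that explicitly.
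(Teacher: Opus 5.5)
Your computations are correct, and the obstruction you hit is real: the rotation intercept in the statement is false. The step where you ``obtain'' $\tilde b=b(\cos\theta+a\sin\theta)$ by trading $1/(\cos\theta-a\sin\theta)$ for $\cos\theta+a\sin\theta$ therefore proves nothing, and it cannot be repaired. For instance, rotating $y=1$ by $\pi/4$ about the origin gives $y=x+\sqrt{2}$, which is tangent to the unit circle as it must be. The printed formula instead gives intercept $\sqrt{2}/2$.

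Your search for a rescuing reading will also fail, for a structural reason. In either form, $y=ax+b$ or $x=ay+b$, the distance from the origin to the line is $|b|/\sqrt{1+a^2}$, and rotations about the origin and their inverses preserve it. The stated slope satisfies $1+\tilde a^2=(1+a^2)/(\cos\theta-a\sin\theta)^2$. Hence $|\tilde b|=|b|/|\cos\theta-a\sin\theta|$ is forced. This differs from the printed value by the factor $|1-(1+a^2)\sin^2\theta|$, which is not $1$ for generic $a,\theta$.

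The paper's proof is the same direct substitution you use: take the forward image of $(x,y)$ with $y=ax+b$ and plug it into $\tilde y=\tilde a\tilde x+\tilde b$. It only asserts that ``solving'' yields the stated formulas. Carried out at the point $(0,b)$, it gives $\tilde b=b(\cos\theta+\tilde a\sin\theta)$. This simplifies exactly to your $b/(\cos\theta-a\sin\theta)$, so the printed formula evidently has $a$ where $\tilde a$ belongs. The right move is to prove the corrected identity and say plainly that the lemma as printed is wrong in that entry, not to present a first-order approximation as a proof.

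You should also say explicitly that your treatment of the other two cases reinterprets the statement rather than following the paper. The paper fixes $y=ax+b$ and forward images. Under that reading the same substitution gives $\tilde b=b+q_2-aq_1$ and $\tilde b=\lambda b$, not the printed $b+aq_2-q_1$ and $b/\lambda$. It is your reading ($x=ay+b$, $\tilde r=f^{-1}(r)$) that makes those two entries literally true.

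Having fixed that reading, you then switch to $y=ax+b$ with forward images for the rotation. This happens to be harmless. The reflection $(x,y)\mapsto(y,x)$ conjugates $R_\theta$ to $R_{-\theta}$, so it exchanges the two readings for rotations, and both give the same $\tilde a$ and the same $\tilde b=b/(\cos\theta-a\sin\theta)$. Still, verify the rotation case directly in the convention you chose, so that all three entries are established under one definition of ``coefficients of $\tilde r$''.
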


\begin{proof}
Let consider a point $(x,y) \in r$ and another pair $(\tilde{x}, \tilde{y}) \in \tilde{r}$, that means $y=ax+b$ and $\tilde{y}=\tilde{a}\tilde{x}+\tilde{b}$.Then, \begin{itemize}\item if $\tilde{r}$ is obtained by applying a rotation then $(\tilde{x}, \tilde{y}) = (x \cos \theta - y \sin \theta, x \sin \theta + y \cos \theta)$, \item  if $\tilde{r}$ is obtained by applying a translation then $(\tilde{x}, \tilde{y}) = (x + q_1, y + q_2)$, \item if $\tilde{r}$ is obtained by applying a homothety then $(\tilde{x}, \tilde{y}) = (\lambda x, \lambda y)$.
\end{itemize}
Replacing in $\tilde{y} = \tilde{a} \tilde{x} + \tilde{b}$ with the previous relation in each case and solving for $\tilde{a}$ and $\tilde{b}$ we obtain the result.
\end{proof}

\subsection{Principal theoretical result}
In the following, we present a result concerning a geometric property of optimal transport in the plane. Later, it will play a critical role in the design of an algorithm for domain adaptation within the same framework. Before this, we need to establish the notion of ``crossing'' relationed with a transport map: 

\begin{definition} We say a transport map $T:X \rightarrow Y$ between two sets in the plane $\mathbb{R}^2$ has a crossing if for any two points in $X$, the segments connecting them to their corresponding points in $Y$ intersect.    
\end{definition}

As an example, the transport map represented by dash black lines in the figure \ref{fig:polygon} has a crossing in the point $O$. We are ready to present the main theoretical result of this work:

\begin{prop} \label{prop_1}
Consider two sets with same size $n$ in $\mathbb{R}^2$\\ $\mathcal{D}^s=\{(x_1^s, y_1^s), \ldots, (x_n^s, y_n^s)\}$ and $\mathcal{D}^t=\{(x_1^t, y_1^t), \ldots, (x_n^t, y_n^t)\}$. If we use uniform measures $\mu$ and $\nu$ over $\mathcal{D}^s$ and $\mathcal{D}^t$ respectively, and assume the cost function is the $p$-norm:
$$
    c((x_i^s, y_i^s), (x_j^t, y_j^t)) = \| (x_j^t, y_j^t) - (x_i^s, y_i^s) \|_p, \quad p \geq 2,
$$
then the optimal transport map $T$ has no crossings.
\end{prop}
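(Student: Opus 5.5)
The plan is the classical ``uncrossing'' argument via the triangle inequality. Since $\mu$ and $\nu$ are uniform on $n$ points each, the Kantorovich problem is a linear program over the Birkhoff polytope, so an optimal coupling can be taken at a vertex, i.e.\ a permutation $\sigma$ of $\{1,\dots,n\}$. The optimal map is then $T(x_i^s)=x_{\sigma(i)}^t$. The argument is by contradiction: suppose $T$ has a crossing, show that swapping the two crossing assignments strictly lowers the cost, and conclude that $T$ was not optimal.

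Concretely, write $a_1=(x_i^s,y_i^s)$, $b_1=T(a_1)$, $a_2=(x_k^s,y_k^s)$, $b_2=T(a_2)$, and suppose the segments $[a_1,b_1]$ and $[a_2,b_2]$ meet at a point $O$. Let $\sigma'$ be $\sigma$ composed with the transposition exchanging $i$ and $k$, so $a_1\mapsto b_2$ and $a_2\mapsto b_1$. All other terms of the cost are unchanged. Because $O$ lies on both segments and any norm is additive along a segment, we have $\|a_1-b_1\|_p=\|a_1-O\|_p+\|O-b_1\|_p$, and likewise for $a_2,b_2$. The triangle inequality then gives
\begin{align*}
\|a_1-b_2\|_p+\|a_2-b_1\|_p &\le \|a_1-O\|_p+\|O-b_2\|_p+\|a_2-O\|_p+\|O-b_1\|_p \\
&= \|a_1-b_1\|_p+\|a_2-b_2\|_p .
\end{align*}
Hence the cost of $\sigma'$ is at most that of $\sigma$.

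The key step, and the place where the hypothesis $p\ge 2$ enters, is strictness. For $1<p<\infty$, and in particular for $p\ge2$, the $\ell^p$ norm is strictly convex. Consequently equality $\|u\|_p+\|v\|_p=\|u+v\|_p$ with $u,v\neq0$ forces $u$ and $v$ to be positively parallel. Equality in the display therefore requires $a_1,O,b_2$ to be collinear with $O$ between them, and similarly for $a_2,O,b_1$. Combined with $O\in[a_1,b_1]\cap[a_2,b_2]$, this puts all four points on a single line. (The case where $O$ coincides with an endpoint is covered by the same reasoning.) Thus, outside the degenerate collinear configuration, the swap strictly decreases the total cost, which contradicts the optimality of $\sigma$.

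I expect the main obstacle to be this degenerate case: four collinear points with overlapping segments. Here the swap may leave the cost unchanged, so optimality alone does not exclude the crossing. I would handle it by selecting, among all optimal permutations, one that also minimizes a strictly convex tie-breaker, for instance $\sum_i\|a_i-T(a_i)\|_2^2$. On a line, the monotone (order-preserving) matching is optimal for both criteria and has nested-free, non-overlapping segments. Alternatively, I would interpret the statement as asserting the existence of a crossing-free optimal map, or as holding for points in general position. Finally, I would remark that the case $p=1$ fails exactly at the strictness step, because $\|\cdot\|_1$ is not strictly convex, which explains the restriction on $p$.
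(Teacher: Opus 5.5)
Your main argument is the paper's argument: take an optimal permutation, swap the targets of the two segments meeting at $O$, and compare costs with the triangle inequality routed through $O$. Your version is more careful than the paper's. The paper gets strictness by simply asserting $\|A-O\|_p+\|O-B\|_p>\|A-B\|_p$ ``by the triangular inequality, valid for any $p$-norm''. That strict inequality does not follow from the triangle inequality, and the sentence would apply verbatim to $p=1$, where strictness genuinely fails. Your appeal to strict convexity of $\|\cdot\|_p$ is the missing justification. It also shows the real range is $1<p<\infty$, not $p\ge 2$. Indeed, for $p=\infty$ the claim fails: the matching $(0,0)\mapsto(4,1)$, $(0,1)\mapsto(4,0)$ has a crossing yet is optimal, since both matchings cost $8$. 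Your equality analysis, endpoint cases included, correctly reduces the exceptional situation to $a_1,b_1,a_2,b_2$ lying on one line.

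The gap is in how you dispose of that collinear case. The monotone matching on a line does not have non-overlapping segments, so the tie-breaker does not work. The ``existence'' reading fails for the same reason. Take $\mathcal{D}^s=\{(0,0),(1,0)\}$ and $\mathcal{D}^t=\{(2,0),(3,0)\}$; both permutations cost $4$, and in both the transport segments intersect:
\begin{itemize}
\item the monotone matching gives $[0,2]\times\{0\}$ and $[1,3]\times\{0\}$, which overlap;
\item the other gives $[1,2]\times\{0\}\subset[0,3]\times\{0\}$.
\end{itemize}
So under the paper's definition no crossing-free transport map exists at all, and the proposition is false without an extra hypothesis. The same example also refutes the uniqueness of the Monge solution that the paper invokes. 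You should drop the tie-breaker paragraph and commit to one of two repairs. Either assume general position (for instance, no three points of $\mathcal{D}^s\cup\mathcal{D}^t$ collinear), or call two transport segments crossing only when they meet and do not lie on a common line. Under either convention your contradiction argument is complete: an optimal permutation with a crossing could be strictly improved by the swap.
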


\begin{proof}
Since the two sets have the same amount of points and the measures are uniform, the optimal transport map must be a permutation of the index set $\{1, \ldots, n\}$. Formally, given $\sigma \in S_n$ (here $S_n$ is the group of permutations), let $T_{\sigma}$ be the map $(x_i^s, y_i^s) \xrightarrow{T_{\sigma}}(x_{\sigma(i)}^t, y_{\sigma(i)}^t)$. First of all, recall that when the source and target measures are supported on finite sets with the same amount of points and the cost function is convex (our case when using $p \geq 2$), then the Monge problem has a unique solution, as showed for example in \cite{villani2008optimal}. 

To prove the result, we will show that undoing a crossing reduces the total cost, and therefore inductively the optimal transport map cannot have intersections. Let $T$ be a transport map having a crossing at point $O$, represented with back dash lines as in figure \ref{fig:polygon}. We define another transport map $\tilde{T}$ by swapping only the target points of $B$ and $C$ (represented by the red segments in Figure \ref{fig:polygon}). Since $T$ and $\tilde{T}$ differ only on this pair of points, the difference in the total cost is totally explained by the difference in these points, therefore we can focus on their associated costs.

\begin{figure}[h!]
    \begin{center}
        \begin{tikzpicture}
            % Definimos los puntos
            \coordinate (A) at (-0.5,3); % y_j
            \coordinate (B) at (5,1.5); % x_k
            \coordinate (C) at (4,0); % x_i
            \coordinate (D) at (1,0); % y_l

            % Calculamos el punto de intersección de las diagonales AC y BD
            \coordinate (O) at (intersection of A--C and B--D);

            % Dibujamos el polígono
            \filldraw[fill=orange, opacity=0.2, draw=black] (A) -- (B) -- (C) -- (D) -- cycle;
            
            % Dibujamos las diagonales
            \draw[thick, dashed] (A) -- (C);
            \draw[thick, dashed] (B) -- (D);

            % Lados en rojo
            \draw[thick, red] (A) -- (B);
            \draw[thick, red] (C) -- (D);
            
            % Etiquetas de los puntos
            \node[anchor=south] at (A) {$A:(x_j^t, y_j^t)$};
            \node[anchor=west] at (B) {$B:(x_k^s, y_k^s)$};
            \node[anchor=north] at (C) {$C:(x_i^s, y_i^s)$};
            \node[anchor=east] at (D) {$D:(x_l^t, y_l^t)$};

            % Etiqueta del punto de intersección O
            \node[anchor=south] at (O) {$O$};

            \end{tikzpicture}
    \end{center}
    \caption{The dash black segments represent the transport $T$ having a crossing whereas the red segments represent another transport plan $\tilde{T}$ without a crossing.}
    \label{fig:polygon}
\end{figure}

Using the $p$-norm as the cost function implies that the transport cost is the sum of the segment lengths because the integral $\int_X c(x, T(x)) d \mu(x)$ is changed by $\sum_{x \in \text{supp}(\mu)} ||x - T(x)||_p$ where $\text{supp}(\mu) = \{x \in X \hspace{2 mm} | \hspace{2 mm} \mu(x) > 0 \}$ stands for the support of the measure $\mu$. Therefore in order to prove that $\tilde{T}$ reduces the cost with respect to $T$, we show that the sum of the red line lengths is less than that of the black dashed lines. Following the figure \ref{fig:polygon}, lets define $A=(x_j^t, y_j^t), B=(x_k^s, y_k^s), C=(x_i^s, y_i^s)$ and $D=(x_l^t, y_l^t)$. With this notation, the cost of the red segments is $||A-B||_p + ||D-C||_p$ and the cost of the black dash lines is $||A-C||_p + ||D-B||_p$. Finally, by the triangular inequality, valid for any $p$-norm, we have $||A-O||_p + ||O-B||_p > ||A-B||_p$ and $||D-O||_p + ||O-C||_p > ||D-C||_p$, giving:
\begin{align*}
    ||A-C||_p + ||D-B||_p & = ||A-O||_p + ||O-C||_p  \\
    & \hspace{3 mm} + ||D-O||_p + ||O-B||_p \\ 
    & > ||A-B||_p + ||D-C||_p,
\end{align*}
which proves the result.
\end{proof}

The following proposition apply this geometric property of optimal transport in the plane to the case when $f$ is in $\textbf{MOV}$.

\begin{prop} \label{T_equal_f}
    Let $\mathcal{D}^s$ and $\mathcal{D}^t$ be two sets on $\mathbb{R}^2$ of the form $\{(x_1^s, y_1^s), \ldots, (x_n^s, y_n^s)\}$ and \\ $\{(x_1^t, y_1^t), \ldots, (x_n^t, y_n^t)\}$ with the same cardinality. If $\mathcal{D}^s$ and $\mathcal{D}^t$ are related by a map $f \in \textbf{MOV}$, we use uniform measures over $\mathcal{D}^s$ and $\mathcal{D}^t$ and as cost function the $p$-norm, then the optimal transport map $T$ is unique and coincides with the map $f$.
\end{prop}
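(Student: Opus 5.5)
My plan is to treat the three kinds of maps in $\textbf{MOV}$ separately and, in each case, to certify optimality of $T_f$ by a dual (Kantorovich--Rubinstein) argument rather than by Proposition \ref{prop_1} alone. Proposition \ref{prop_1} only gives a \emph{necessary} condition: the optimal map has no crossings. Many non-crossing permutations exist, so knowing that $f$ produces no crossings does not show that $f$ is optimal. Since the cost $c(x,y)=\|x-y\|_p$ is a metric, a permutation $\sigma$ is optimal as soon as there is a $1$-Lipschitz function $u$ (with respect to $\|\cdot\|_p$) satisfying $u(T_\sigma(z))-u(z)=\|T_\sigma(z)-z\|_p$ for every source point $z$. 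Indeed, for any other $\tau\in S_n$,
$$\sum_i \|z_{\tau(i)}^t-z_i^s\|_p \ \ge\ \sum_i \big(u(z^t_{\tau(i)})-u(z^s_i)\big)=\sum_i \big(u(z^t_{\sigma(i)})-u(z^s_i)\big)=\sum_i\|z^t_{\sigma(i)}-z^s_i\|_p ,$$
where the middle equality holds because both sums run over the same target set.

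For a translation by $q\neq 0$, I will take $u(z)=\langle w,z\rangle$, where $w$ is a vector of dual norm $\|w\|_{p'}=1$ with $\langle w,q\rangle=\|q\|_p$. Such a $w$ exists by Hölder's inequality. This $u$ is $1$-Lipschitz and $u(z+q)-u(z)=\|q\|_p$, so $T_f$ is optimal. For a homothety $z\mapsto\lambda z$ with $\lambda>0$ and center at the origin, I will take $u(z)=\pm\|z\|_p$, with the sign of $\lambda-1$. Then $u(\lambda z)-u(z)=|\lambda-1|\,\|z\|_p=\|\lambda z-z\|_p$, which again gives optimality.

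Uniqueness comes from analysing when equality holds in the chain above. A competing $\tau$ is optimal only if every segment $z_i^s\to z^t_{\tau(i)}$ is a ``transport ray'' of $u$, i.e.\ it lies in a direction along which $u$ increases at unit rate. For $p=2$ this forces each such segment to be parallel to $q$ (respectively, radial from the center). Uniqueness can then fail only in degenerate configurations, for example several source points collinear along the direction $q$. I expect to need a general-position assumption to exclude these, and the uniqueness remark quoted in the proof of Proposition \ref{prop_1} should be read under that assumption.

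The main obstacle is the rotation case, and I expect the statement to need an extra hypothesis there. For a rotation by $\pi$ of a set symmetric about the center, the identity permutation has cost $0$ and beats $f$. So for rotations I would first restrict the setting to a small angle $\theta$ with the points far from the center relative to their mutual distances, or to well-separated clusters, as in the $K$-means step of the method. Under such a restriction I would try a potential of the form $u(z)=\|z-c\|_2\,\phi(\arg(z-c))$, or else a direct cyclical-monotonicity check. The idea is to show that any nontrivial cycle in the permutation costs strictly more once $\theta\,\|z_i-c\|$ is smaller than half the minimal inter-cluster distance. Proposition \ref{prop_1} would then serve only as a consistency check: $T_f$, whose segments are non-crossing chords of concentric circles, is indeed crossing-free.
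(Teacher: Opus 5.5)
Your route differs from the paper's, and yours is the one that actually works. The paper argues only through Proposition~\ref{prop_1}. Since the optimal permutation has no crossings, it would suffice that $f$ be the \emph{only} crossing-free permutation, and the paper asserts this case by case without argument. Uniqueness is then taken from the remark that a convex cost gives a unique Monge solution. That remark fails for discrete measures: with sources $0,q$, the matching $0\mapsto 2q$, $q\mapsto q$ ties with $f$.

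Your proposal bypasses the combinatorial claim altogether. The functions $u(z)=\langle w,z\rangle$ and $u(z)=\pm\|z\|_p$ are valid Kantorovich--Rubinstein certificates. So optimality of $f$ for translations and for homotheties with $\lambda>0$ is genuinely proved, for any norm cost; the hypothesis $p\ge 2$ plays no role there.

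Your equality analysis also closes uniqueness, and not only for $p=2$. For $1<p<\infty$ the norm is strictly convex. Hence $\langle w,v\rangle=\|v\|_p$ forces $v\in\mathbb{R}_{\ge 0}\,q$. Likewise, equality in the reverse triangle inequality forces source and target onto a common ray from the center. So $f$ is the unique optimum as soon as no two source points lie on a common line parallel to $q$ (respectively, a common ray from the center). That is the general-position hypothesis you were after. At $p=\infty$ this breaks on an open set of configurations: for sources $(0,0),(0,\tfrac12)$ and $q=(1,0)$, the swap also costs $2$.

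You are right that the rotation case is false as stated. However, your $\theta=\pi$ example lies outside the range $\theta\in[0,\pi)$ of Lemma~\ref{relation_between_parameters}. Counterexamples exist inside that range too. The square $\{(\pm1,0),(0,\pm1)\}$ is invariant under rotation by $\theta=\pi/2$. More strikingly, take any $\theta\in(0,\pi)$ and the points $A=(1,0)$, $B=(\cos\frac{\theta}{2},\sin\frac{\theta}{2})$. Their rotation chords interleave on the unit circle, so $f$ itself has a crossing, which directly refutes the paper's rotation step. Swapping the targets lowers the Euclidean cost from $4\sin\frac{\theta}{2}$ to $4\sin\frac{\theta}{2}\cos\frac{\theta}{4}$.

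In this example the ratio of mutual distance to radius is $2\sin\frac{\theta}{4}$, which is small for small $\theta$. So your first suggested restriction (small $\theta$, points far from the center) does not help. Your second one, displacement below half the minimal separation, does, and it needs no potential. Suppose $\|f(z_i)-z_i\|_p<\delta/2$ for all $i$, where $\delta$ is the minimal distance between distinct source points. Then for any $\tau$ and any $i$ with $\tau(i)\neq i$,
\[
\|f(z_{\tau(i)})-z_i\|_p\ \ge\ \|z_{\tau(i)}-z_i\|_p-\|f(z_{\tau(i)})-z_{\tau(i)}\|_p\ >\ \delta/2\ >\ \|f(z_i)-z_i\|_p .
\]
Hence $f$ is the unique optimum, for any map $f$ at all.

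Finally, your restriction to $\lambda>0$ is necessary. This matters because \textbf{MOV} and the paper's simulations allow $\lambda<0$, and $\lambda=-1$ is exactly the rotation by $\pi$.
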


\begin{proof}
    As we are in the hypothesis of the proposition \ref{prop_1} we know that the optimal transport is unique and also that it has not crossing. Therefore to prove the result, we need to show that the map $f$ is the unique transport map that does not have crossing. 

    In the case when $f$ is a rotation of angle $\theta$, we immediately see that between all possibles transport map (permutation), the rotation is the unique that does not have a crossing and hence has to be the optimal transport map. Similarly, in the case where $f$ is a translation $(x, y) \mapsto (x+q_1, y + q_2)$, then all transport segments are parallel and no crossings occur. Any other permutation would necessarily introduce crossings, so the translation is the unique admissible transport map. Finally, in the case of the homothety $(x,y) \mapsto  (\lambda x,\lambda y)$, the points move radially with respect to the origin, preserving the non-crossing property. Therefore, this matching is again the unique transport map without crossings.

    In all cases, uniqueness implies that the optimal transport map must be exactly $f$.
\end{proof}

As an example, consider the case where the set $\mathcal{D}^s$ is $\{(1,0), (2,0), (3,0)\}$ and $f$ is a rotation of angle $\theta = \frac{\pi}{2}$, that means $\mathcal{D}^t$ is $\{(0,1), (0,2), (0,3)\}$. As the sets have $3$ points, we have $3! = 6$ possible transport plan. The proposition \ref{T_equal_f} assert that only one is optimal and even more that it is the rotation. The $6$ possibles transport map and their associated $|| \cdot ||_2$ costs are show in the figure \ref{fig:example_n_3}.

\begin{figure}[!ht]
    \begin{center}
        \begin{tikzpicture}
            % Transport plan 1
            \coordinate (As) at (0.6, 0);
            \coordinate (Bs) at (1.2, 0);
            \coordinate (Cs) at (1.8, 0);
            \coordinate (At) at (0, 0.6); 
            \coordinate (Bt) at (0, 1.2); 
            \coordinate (Ct) at (0, 1.8); 

            \draw[thick, dashed, gray] (As) -- (At);
            \draw[thick, dashed, gray] (Bs) -- (Bt);
            \draw[thick, dashed, gray] (Cs) -- (Ct);

            \filldraw (As) circle (1 pt);
            \filldraw (Bs) circle (1 pt);
            \filldraw (Cs) circle (1 pt);
            \filldraw (At) circle (1 pt);
            \filldraw (Bt) circle (1 pt);
            \filldraw (Ct) circle (1 pt);

            % Transport plan 2
            \draw[thick, dashed, gray] (As) + (3,0) -- (3, 0.6);
            \draw[thick, dashed, gray] (Bs) + (3,0) -- (3, 1.8);
            \draw[thick, dashed, gray] (Cs) + (3,0) -- (3, 1.2);

            \filldraw (As) + (3,0) circle (1 pt);
            \filldraw (Bs) + (3,0) circle (1 pt);
            \filldraw (Cs) + (3,0) circle (1 pt);
            \filldraw (At) + (3,0) circle (1 pt);
            \filldraw (Bt) + (3,0) circle (1 pt);
            \filldraw (Ct) + (3,0) circle (1 pt);

            % Transport plan 3
            \draw[thick, dashed, gray] (As) + (6,0) -- (6, 1.8);
            \draw[thick, dashed, gray] (Bs) + (6,0) -- (6, 1.2);
            \draw[thick, dashed, gray] (Cs) + (6,0) -- (6, 0.6);

            \filldraw (As) + (6,0) circle (1 pt);
            \filldraw (Bs) + (6,0) circle (1 pt);
            \filldraw (Cs) + (6,0) circle (1 pt);
            \filldraw (At) + (6,0) circle (1 pt);
            \filldraw (Bt) + (6,0) circle (1 pt);
            \filldraw (Ct) + (6,0) circle (1 pt);

            % Transport plan 4
            \draw[thick, dashed, gray] (As) + (0,-3) -- (0, -1.8);
            \draw[thick, dashed, gray] (Bs) + (0,-3) -- (0, -2.4);
            \draw[thick, dashed, gray] (Cs) + (0,-3) -- (0, -1.2); 

            \filldraw (As) + (0,-3) circle (1 pt);
            \filldraw (Bs) + (0,-3) circle (1 pt);
            \filldraw (Cs) + (0,-3) circle (1 pt);
            \filldraw (At) + (0,-3) circle (1 pt);
            \filldraw (Bt) + (0,-3) circle (1 pt);
            \filldraw (Ct) + (0,-3) circle (1 pt);

            % Transport plan 5
            \draw[thick, dashed, gray] (As) + (3,-3) -- (3, -1.8);
            \draw[thick, dashed, gray] (Bs) + (3,-3) -- (3, -1.2);
            \draw[thick, dashed, gray] (Cs) + (3,-3) -- (3, -2.4); 

            \filldraw (As) + (3,-3) circle (1 pt);
            \filldraw (Bs) + (3,-3) circle (1 pt);
            \filldraw (Cs) + (3,-3) circle (1 pt);
            \filldraw (At) + (3,-3) circle (1 pt);
            \filldraw (Bt) + (3,-3) circle (1 pt);
            \filldraw (Ct) + (3,-3) circle (1 pt);

            % Transport plan 6
            \draw[thick, dashed, gray] (As) + (6,-3) -- (6, -1.2);
            \draw[thick, dashed, gray] (Bs) + (6,-3) -- (6, -2.4);
            \draw[thick, dashed, gray] (Cs) + (6,-3) -- (6, -1.8); 

            \filldraw (As) + (6,-3) circle (1 pt);
            \filldraw (Bs) + (6,-3) circle (1 pt);
            \filldraw (Cs) + (6,-3) circle (1 pt);
            \filldraw (At) + (6,-3) circle (1 pt);
            \filldraw (Bt) + (6,-3) circle (1 pt);
            \filldraw (Ct) + (6,-3) circle (1 pt);

            % Costs
            \node[anchor=south] at (1, 2) {$\text{Cost} \approx \hspace{1 mm} 8.485$};
            \node[anchor=south] at (4, 2) {$\text{Cost} \approx \hspace{1 mm} 8.625$};
            \node[anchor=south] at (6.5, 2) {$\text{Cost} \approx \hspace{1 mm} 9.152$};

            \node[anchor=south] at (1, -1) {$\text{Cost} \approx \hspace{1 mm} 8.719$};
            \node[anchor=south] at (4, -1) {$\text{Cost} \approx \hspace{1 mm} 9.003$};
            \node[anchor=south] at (6.5, -1) {$\text{Cost} \approx \hspace{1 mm} 9.003$};
            \end{tikzpicture}
    \end{center}
    \caption{When the sets $\mathcal{D}^s$ and $\mathcal{D}^t$ has $3$ points there are $3!=6$ possibles transport map. In the image we show all possibilities and their costs using $||\cdot||_2$ as cost function. Clearly the optimal transport map is the one with out crossing and is exactly the rotation.}\label{fig:example_n_3}
\end{figure}
\subsection{Domain Adaptation framework}
We now consider a domain adaptation setting with two datasets in $\mathbb{R}^2$: $\mathcal{D}^s = \{(x_i^s, y_i^s)\}_{i=1}^{n_s}$ and $\mathcal{D}^t = \{(x_i^t, y_i^t)\}_{i=1}^{n_t}$, where $n_s \gg n_t$, as in transfer learning. We assume both lie closed to straight lines related by an unknown transformation $f \in \textbf{MOV}$. The source $\mathcal{D}^s$ is used for training, while $\mathcal{D}^t$ represents a target domain where data is scarce due to cost or physical limitations, reflecting common challenges of data drift as discussed in \cite{Sahiner}. Our goal is to adapt a linear regression model trained on $\mathcal{D}^s$ to  $\mathcal{D}^t$.

Although each type of $f \in \textbf{MOV}$ (rotation, translation, or homothety) requires a specific treatment presented in Algorithm \ref{alg:common_step}: apply $K$-means (\cite{K-means}) with $K = n_t$ on $\mathcal{D}^s$ to get a centroid set $\mathcal{D}^k$, then use optimal transport to align $\mathcal{D}^k$ with $\mathcal{D}^t$. This procedure is illustrated in Figure~\ref{fig:R2} for a rotation of angle $\theta = \frac{\pi}{4}$, and is motivated by Proposition~\ref{T_equal_f}, since $|\mathcal{D}^k| = |\mathcal{D}^t|$. While $\mathcal{D}^k$ and $\mathcal{D}^t$ may not be strictly related by $f$, the optimal transport map $T$ can approximate $f$ under suitable conditions (see Section \ref{conclusion}).

\begin{figure}[ht!]
    \centering
    \includegraphics[width=1\linewidth]{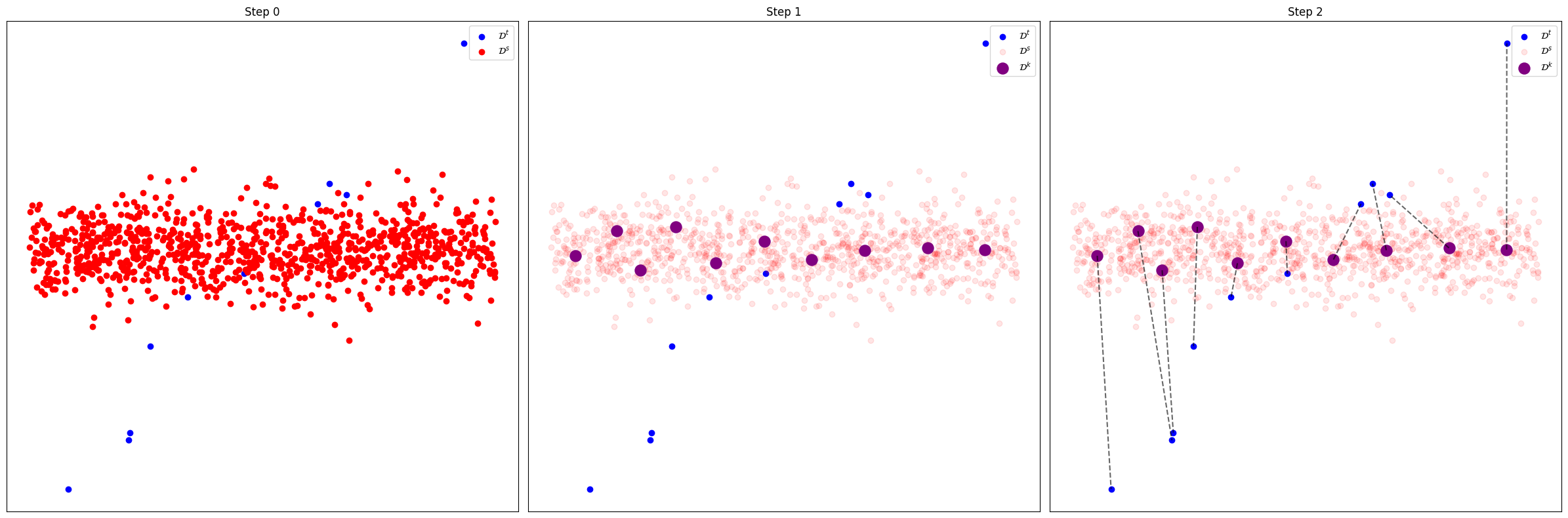}
    \caption{Red points are the source domain, blue ones the target domain and the purple dots are the $K$-means centroids. Step 0 is the raw data, in Step 1 we apply $K$-means with $K = n_t$ and in Step 3 we find the optimal transport map.}
    \label{fig:R2}
\end{figure}

\begin{algorithm}[htbp]
\caption{Common step using $K$-Means and Optimal Transport}
\label{alg:common_step}
\begin{algorithmic}
\REQUIRE $\mathcal{D}^s, \mathcal{D}^t$: Source and target domain.
\ENSURE Alignment of the domains $\mathcal{D}^k$ and $\mathcal{D}^t$.
\STATE \textbf{Step 1:} $\mathcal{D}^k \gets \text{KMeans}(\mathcal{D}_s, \text{n\_clusters} = n_t)$.
\STATE \textbf{Step 2:} $T \gets$ Optimal transport map obtained with uniform measures $\mu$, $\nu$ and $|| \cdot ||_p$.
\STATE \textbf{Step 3:} Use the optimal transport map $T$ to allying the sets $\mathcal{D}^t$ and $\mathcal{D}^k$.

\RETURN 
\end{algorithmic}
\end{algorithm}

In the following three subsections we propose the different approach for each possible map in $\textbf{MOV}$.
\subsubsection{Rotation}
After aligning the sets $\mathcal{D}^k$ and $\mathcal{D}^t$ we use the algorithm proposed in \cite{arun1987} to estimate the rotation angle $\theta$ show in Algorithm \ref{alg:rotation_angle}.

\begin{algorithm}[H]
\caption{Estimate rotation angle $\hat{\theta}$ (\cite{arun1987})}
\label{alg:rotation_angle}
\begin{algorithmic}
\REQUIRE Two point sets $\mathcal{D}^s = \{x_j^s\}_{j=1}^n$, $\mathcal{D}^t = \{x_j^t\}_{j=1}^n$ in $\mathbb{R}^2$
\ENSURE Estimated angle $\hat{\theta}$

\STATE Compute centroids: $\bar{x}^s = \frac{1}{n} \sum x_j^s$, $\bar{x}^t = \frac{1}{n} \sum x_j^t$
\STATE Center points: $\tilde{x}_j^s = x_j^s - \bar{x}^s$, $\tilde{x}_j^t = x_j^t - \bar{x}^t$
\STATE Compute $H = \sum \tilde{x}_j^s (\tilde{x}_j^t)^\top$
\STATE SVD: $H = U \Sigma V^\top$, set $\hat{R}_{\theta} = VU^\top$
\STATE If $\det(\hat{R}_{\theta}) = -1$, flip last column of $V$ and recompute $\hat{R}_{\theta} = VU^\top$
\STATE Let $z$ be the first column of $\hat{R}_{\theta}$.
\STATE $\hat{\theta} \gets \arg(z)$
\RETURN $\hat{\theta}$
\end{algorithmic}
\end{algorithm}

\subsubsection{Translation} \label{translation_math}
After alignment $\mathcal{D}^t$ and $\mathcal{D}^k$, each point satisfies $(x_i^t, y_i^t) = (x_i^k + q_{1i}, y_i^k + q_{2i})$. Thus, the translation vector per point is  
\[
(q_{1i}, q_{2i}) = (x_i^k, y_i^k) - (x_i^t, y_i^t).
\]
In Proposition \ref{T_equal_f}, we assume constant translation: $q_{1i} = q_1$, $q_{2i} = q_2$ for all $i$. However, since $\mathcal{D}^k$ and $\mathcal{D}^t$ are not perfectly aligned, we estimate $(q_1, q_2)$ as the average of $\{(q_{1i}, q_{2i})\}_{i=1}^{n_t}$.

\subsubsection{Homothety} \label{homothety_math}
The domains $\mathcal{D}^k$ and $\mathcal{D}^t$ satisfy $(x_i^t, y_i^t) = (\lambda_{{x_{i}}} x_i^k, \lambda_{{y_{i}}}  y_i^k)$. We estimate the scaling factors as 
$$
(\lambda_{{x_{i}}}, \lambda_{{y_{i}}}) = (x_i^t, y_i^t) \oslash (x_i^k, y_i^k),
$$
where $\oslash$ denotes element-wise division. Then we compute $\lambda_i=\frac{\lambda_{x_i} + \lambda_{y_i}}{2}$ for each $i$ and finally we estimate the homothety factor $\lambda$ as the mean of all the $\{\lambda_i\}_{i=1}^{n_t}$. This is a usual approach when estimating a homotecy constants.

The propose method to estimate the map $f$ relating the domains $\mathcal{D}^s$ and $\mathcal{D}^t$ is presented in Algorithm \ref{alg:estimacion_del_angulo_SVD}.

\begin{algorithm}[ht!]
\caption{Estimation of the map $f \in \textbf{MOV}$}
\label{alg:estimacion_del_angulo_SVD}
\begin{algorithmic}
\REQUIRE $\mathcal{D}^s, \mathcal{D}^t$: Source and target domain, type of transformation in $\textbf{MOV}$.
\ENSURE $\hat{f}$ estimation of $f$.
\STATE \textbf{Step 1:} Apply algorithm \ref{alg:common_step} to aligning the domains $\mathcal{D}^k$ and $\mathcal{D}^t$.
\STATE \textbf{Step 2:} IF (type == rotation):
\STATE \hspace{1.6 cm} $\hat{\theta} \gets$ Estimate rotation angle between $\mathcal{D}^s$ and $\mathcal{D}^k$\\  \hspace{1.6 cm} using \citet{arun1987}.
\STATE \hspace{1.1 cm} IF (type == translation)
\STATE \hspace{1.6 cm} $(\hat{q_1}, \hat{q_2}) \gets \text{mean} \Big( \Big\{ (q_{1i}, q_{2i})  \Big\}_{i=1}^{n_t} \Big)$, where \STATE 
\hspace{1.6 cm} $(q_{1i}, q_{2i})$ are calculated as in \ref{translation_math}.

\STATE \hspace{1.1 cm} IF (type == homothety)
\STATE \hspace{1.6 cm} $\hat{\lambda} \gets \text{mean} \Big( \Big\{ \lambda_i \Big\}_{i=1}^{n_t} \Big)$, where $\lambda_i$ is calculated as
\STATE \hspace{1.6 cm} in \ref{homothety_math}.
\end{algorithmic}
\end{algorithm}

After estimating the transformation $f \in \textbf{MOV}$, we adapt a linear regression trained on $\mathcal{D}^s$ to the target domain $\mathcal{D}^t$ using Lemma~\ref{relation_between_parameters}. We repeat this process $N$ times and take the median of the estimated parameters $\hat{a}$ and $\hat{b}$. To promote centroid variability in $K$-means, we apply a bootstrap step before estimating $f$ via Algorithm~\ref{alg:estimacion_del_angulo_SVD}. The full procedure, Adapted Linear Regression in $\mathbb{R}^2$ (ALR), is summarized in Algorithm~\ref{alg:rotated_regression}. Taking medians of means is a common approach, as discussed in Section 2.3 of \cite{lerasle2019lecture}.

\begin{algorithm}[ht!]
\caption{Adapted Linear Regression in $\mathbb{R}^2$ (ALR)}
\label{alg:rotated_regression}
\begin{algorithmic}
\REQUIRE $\mathcal{D}^s, \mathcal{D}^t$: source and target domains; $N$: number of repetitions; $p$: bootstrap proportion, $\texttt{type}$: type of function $f \in \textbf{MOV}$.
\ENSURE $a_r, b_r$: adapted regression coefficients.
\STATE Fit linear regression on $\mathcal{D}^s$ to obtain $a_s, b_s$
\FOR{$i = 1$ to $N$}
    \STATE $\mathcal{D}^i \gets$ Sample bootstrap of subset $\mathcal{D}^s$ (proportion $p$).
    \STATE $\hat{f}$ $\gets$ Apply Algorithm~\ref{alg:estimacion_del_angulo_SVD} to $\mathcal{D}^t$ and $\mathcal{D}^i$.
    \STATE $a_i, b_i \gets$ Apply lemma \ref{relation_between_parameters} with $a_s, b_s$, and $\hat{f}$ parameters depending on \texttt{type}.
\ENDFOR
\RETURN $a_r = \text{median}(a_1, \ldots, a_N)$, $b_r = \text{median}(b_1, \ldots, b_N)$
\end{algorithmic}
\end{algorithm}

\section{Simulations}
\label{sec:sim}
In this section we present a series of simulations to compare the proposed procedure (which we will called \textbf{ALR} for \textit{Adapted Linear Regression}) against three established benchmarks:
\begin{itemize}
    \item \textbf{Target-Only (TO):} A standard baseline where the model is trained exclusively on the target domain data, representing the performance without any knowledge transfer \cite{vapnik1999overview}.
    \item \textbf{Linear Transfer Learning (LFT):} A common approach where features learned from a source task are kept fixed, and only a linear transformation or output layer is optimized for the target data \cite{yosinski2014transferable}.
    \item \textbf{Trans-Lasso (TL):} A robust transfer learning method designed for high-dimensional linear regression that leverages informative source samples to improve the estimation of the target parameter vector \cite{li2022transfer}.
\end{itemize}
These tests illustrate its practical advantages and highlight scenarios of significant predictive improvement. This section is split in 3 parts: simulations for the rotation, simulations for the translation and simulations for the homothety. We use Python for the simulations which have a great library to work with optimal transport problems due to \cite{JMLR:v22:20-451}. In all the cases we use the $2-$norm as cost function and uses the same noise level because we are interested in the shift problem. 

\bigskip
We begin by exploring two rotation scenarios: in \ref{var_DS}, we vary the source domain size $n_s$ while keeping the angle, noise level, and target size $n_t$ fixed; in \ref{var_theta_sigma}, we vary the angle and noise while keeping $n_s$ and $n_t$ fixed. Then, in \ref{simulation_translation}, we analyze how changes in the and norm of the translation vector affect performance. Finally in \ref{simulation_homothety}, we study the impact of the homothety constant. Overall, our method generally outperforms fitting linear regression directly on the target domain, prompting a second experiment for deeper analysis.

Simulations were conducted by sampling from a line with Gaussian noise: given $a \in \mathbb{R}$ and noise variance $\sigma^2 > 0$, we generate data as $y = ax + \epsilon$, with $\epsilon \sim \mathcal{N}(0, \sigma^2)$. Exploiting rotational symmetry, we align the source domain with the $x$-axis by setting $a = 0$, i.e., $y = \epsilon$. The target domain varies depending on the transformation $f \in \textbf{MOV}$.

\subsection{Rotation simulations}

In the rotation situation, we generate the target domain $\mathcal{D}^t$ fixing an angle $\theta \in [0, \pi)$ and generate the points following $y = \tan(\theta) x + \epsilon$, where $\epsilon \sim \mathcal{N}(0, \sigma^2)$ with the noise variance $\sigma^2$ the same used for the source domain.

\subsubsection{Varying the cardinality of $\mathcal{D}^s$} \label{var_DS}
To evaluate the performance of the proposed method relative to the ratio between target and source domain sizes ($n_t$ and $n_s$), we conducted a simulation using a fixed rotation angle $\theta = \frac{\pi}{4}$, target size $n_t = 10$, and noise level $\sigma^2 = 1$. We varied the source domain size $n_s$ across several orders of magnitude, ranging from $10^1$ to $10^6$. For each value of $n_s$, we executed $1,000$ trials and recorded the median Mean Square Error (MSE), as shown in Figure~\ref{fig:exp_1}. 

The results indicate that while our approach offers no significant advantage when $n_s \approx n_t$, it consistently outperforms the baseline as the ratio $n_s/n_t$ increases, eventually exhibiting asymptotic behavior. Despite a higher frequency of outliers, the lower median MSE suggests that the proposed method is more robust and provides more reliable results in the majority of scenarios.

\begin{figure}[ht!]
    \centering
    \includegraphics[width=\linewidth]{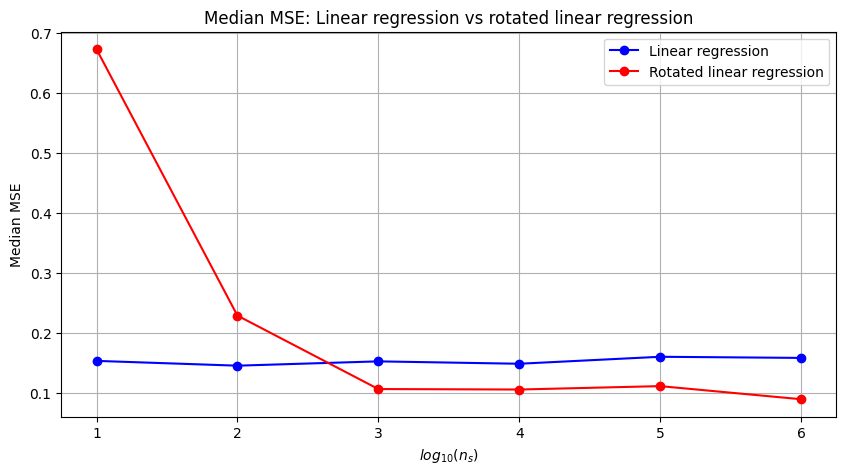}
        \caption{The target domain size $n_t$ is fixed at 10, while the source domain size $n_s$ varies in $\{10, 10^2, 10^3, 10^4, 10^5, 10^6\}$. Blue dots show the median MSE for regression on $\mathcal{D}^t$, and red points show the median MSE using our method.}
        \label{fig:exp_1}
\end{figure}

\subsubsection{Varying the angle $\theta$ and noise variance $\sigma^2$} \label{var_theta_sigma}
To investigate the operational boundaries of the proposed method, we fixed the domain sizes at $n_s=1000$ and $n_t = 50$ and evaluated performance across the $\theta$-$\sigma^2$ plane. We tested a comprehensive range of rotation angles $\theta \in \{\frac{\pi}{6}, \frac{\pi}{4}, \frac{\pi}{3}, \frac{9 \pi}{20}, \frac{11 \pi}{20}, \frac{2\pi}{3}, \frac{3\pi}{4}, \frac{5\pi}{6}\}$ and noise levels $\sigma^2 \in \{0.1, 0.2, 0.5, 1, 2, 5\}$. 

\bigskip
To evaluate performance, 100 independent trials were performed for each $\theta$-$\sigma^2$ configuration. We benchmarked each method against the ground truth using the median Mean Squared Error (MSE). Table~\ref{tab:main_results} displays some relevant results, showing that the proposed method outperforms the another tested methods at low noise levels, effectively leveraging the larger source domain to provide stability. The complete table is in the Appendix.

\begin{table}[ht!]
\centering
\small
\setlength{\tabcolsep}{3pt}
\renewcommand{\arraystretch}{1.1}

\resizebox{\columnwidth}{!}{
\begin{tabular}{c|cccc}
\hline
$\sigma^2$ & TO & LFT & TL & ALR \\ 
\hline

\multicolumn{5}{c}{$\theta=\frac{\pi}{6}$} \\ 
\hline
0.1 & $6.00{\times}10^{-4} \pm 6.00{\times}10^{-4}$ 
    & $5.00{\times}10^{-4} \pm 6.00{\times}10^{-4}$ 
    & $1.30{\times}10^{-3} \pm 1.20{\times}10^{-3}$ 
    & \cellcolor{gray!15}$\mathbf{4.00{\times}10^{-4} \pm 7.00{\times}10^{-4}}$ \\
0.5 & $2.08{\times}10^{-2} \pm 2.60{\times}10^{-2}$ 
    & \cellcolor{gray!15}$\mathbf{1.81{\times}10^{-2} \pm 2.26{\times}10^{-2}}$ 
    & $2.13{\times}10^{-2} \pm 2.54{\times}10^{-2}$ 
    & $2.14{\times}10^{-2} \pm 4.17{\times}10^{-2}$ \\
2.0 & $5.41{\times}10^{-1} \pm 3.99{\times}10^{-1}$ 
    & \cellcolor{gray!15}$\mathbf{5.18{\times}10^{-1} \pm 3.86{\times}10^{-1}}$ 
    & $5.69{\times}10^{-1} \pm 4.11{\times}10^{-1}$ 
    & $3.52{\times}10^{0} \pm 1.25{\times}10^{0}$ \\

\hline
\multicolumn{5}{c}{$\theta=\frac{1}{3}\pi$} \\ 
\hline
0.1 & $4.50{\times}10^{-3} \pm 5.40{\times}10^{-3}$ 
    & $4.90{\times}10^{-3} \pm 5.70{\times}10^{-3}$ 
    & $1.14{\times}10^{-2} \pm 9.40{\times}10^{-3}$ 
    & \cellcolor{gray!15}$\mathbf{3.90{\times}10^{-3} \pm 5.20{\times}10^{-3}}$ \\
0.5 & $2.24{\times}10^{-1} \pm 2.30{\times}10^{-1}$ 
    & \cellcolor{gray!15}$\mathbf{2.21{\times}10^{-1} \pm 2.29{\times}10^{-1}}$ 
    & $2.77{\times}10^{-1} \pm 2.64{\times}10^{-1}$ 
    & $4.39{\times}10^{-1} \pm 5.33{\times}10^{-1}$ \\
2.0 & $1.13{\times}10^{1} \pm 5.11{\times}10^{0}$ 
    & \cellcolor{gray!15}$\mathbf{1.12{\times}10^{1} \pm 5.01{\times}10^{0}}$ 
    & $1.16{\times}10^{1} \pm 5.16{\times}10^{0}$ 
    & $5.69{\times}10^{1} \pm 7.58{\times}10^{0}$ \\

\hline
\multicolumn{5}{c}{$\theta=\frac{9}{20}\pi$} \\ 
\hline
0.1 & \cellcolor{gray!15}$\mathbf{5.43{\times}10^{-1} \pm 8.22{\times}10^{-1}}$ 
    & $6.69{\times}10^{1} \pm 2.94{\times}10^{1}$ 
    & $1.83{\times}10^{0} \pm 1.48{\times}10^{0}$ 
    & $5.90{\times}10^{-1} \pm 1.00{\times}10^{0}$ \\
0.5 & \cellcolor{gray!15}$\mathbf{8.38{\times}10^{1} \pm 3.63{\times}10^{1}}$ 
    & $1.63{\times}10^{2} \pm 4.84{\times}10^{1}$ 
    & $9.43{\times}10^{1} \pm 3.84{\times}10^{1}$ 
    & $2.29{\times}10^{2} \pm 5.33{\times}10^{1}$ \\
2.0 & \cellcolor{gray!15}$\mathbf{9.72{\times}10^{2} \pm 1.36{\times}10^{2}}$ 
    & $9.72{\times}10^{2} \pm 1.35{\times}10^{2}$ 
    & $9.80{\times}10^{2} \pm 1.36{\times}10^{2}$ 
    & $1.29{\times}10^{3} \pm 5.77{\times}10^{1}$ \\

\hline
\end{tabular}
}
\caption{Mean $\pm$ standard deviation of MSE across representative regimes. ALR performs best in low-noise settings but degrades significantly as noise increases, both in accuracy and stability.}
\label{tab:main_results}
\end{table}

\subsection{Translation simulations} \label{simulation_translation}
In the translation setting, we fix $a=0$, $\sigma^2=1$, $n_s=1000$, and $n_t=10$, and test translation norms $1,2,4,$ and $16$. For each norm, we run 100 trials by generating the domains, applying a random translation of the given norm to $\mathcal{D}^t$, and applying our method. This produces 100 MSE values for both the baseline and our approach. The results are summarized in Figure~\ref{fig:translation_simulation}.

\begin{figure}[ht!]
    \centering
    \includegraphics[width=.85\linewidth]{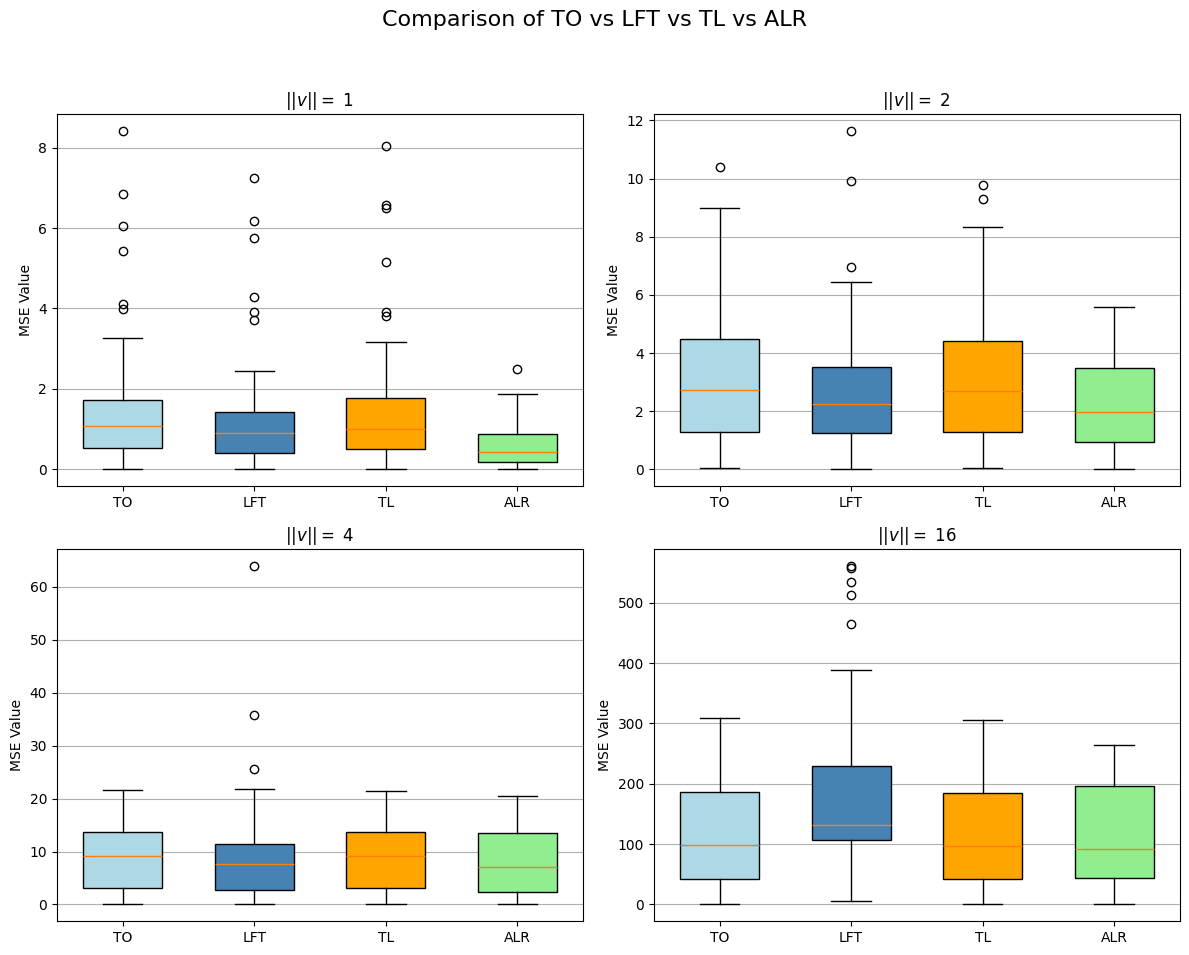}
    \caption{Boxplots of MSE in the translation setting for varying norms of the translation vector $\|v\|$. When the magnitude of the translation is small ($\|v\| < 2$), the proposed method achieves the lowest median error among all methods. However, its relative performance degrades as the norm of the translation increases.}
    \label{fig:translation_simulation}
\end{figure}

\newpage

\subsection{Homothety simulation} \label{simulation_homothety}
In the homothety situation, to generate the target domain $\mathcal{D}^t$ we first draw a set $\mathcal{D}_0^t$ using the same values for $a$ and $\sigma^2$ as for the source domain $\mathcal{D}^s$ and fix $n_t=10$. Then we apply the homothety constant $\lambda$: $\mathcal{D}^t = \lambda \mathcal{D}_0^t = \{\lambda(x,y) \hspace{2 mm} | \hspace{2 mm} (x,y) \in \mathcal{D}_0^t \}$.

\bigskip
To test the method with homothecies, we select for values for the homothety constant $\lambda$: $-2, 0.5, 2$ and $4$. For each fixed $\lambda$, we repeat $100$ times the simulation: first we draw the source and the target domain and then multiply $\mathcal{D}^t$ by $\lambda$. Table \ref{table:homothety} present the mean and variance for the four methods.

\begin{table}[ht!]
\centering
\small
\setlength{\tabcolsep}{3pt}
\renewcommand{\arraystretch}{1.1}

\resizebox{\columnwidth}{!}{
\begin{tabular}{c|cccc}
\hline
$\lambda$ & TO & LFT & TL & ALR \\ 
\hline

\multicolumn{5}{c}{Homothety experiment} \\ 
\hline

-2 & $1.69{\times}10^{-2} \pm 2.33{\times}10^{-4}$ 
   & $1.50{\times}10^{-2} \pm 2.08{\times}10^{-4}$ 
   & $1.36{\times}10^{-2} \pm 1.74{\times}10^{-4}$ 
   & \cellcolor{gray!15}$\mathbf{1.13{\times}10^{-4} \pm 1.75{\times}10^{-8}}$ \\
-1 & $4.20{\times}10^{-3} \pm 4.27{\times}10^{-5}$ 
   & $3.72{\times}10^{-3} \pm 3.42{\times}10^{-5}$ 
   & $2.10{\times}10^{-3} \pm 1.16{\times}10^{-5}$ 
   & \cellcolor{gray!15}$\mathbf{3.96{\times}10^{-5} \pm 1.46{\times}10^{-9}}$ \\
0.5 & $1.26{\times}10^{-3} \pm 2.17{\times}10^{-6}$ 
    & $1.11{\times}10^{-3} \pm 1.70{\times}10^{-6}$ 
    & $2.43{\times}10^{-4} \pm 9.61{\times}10^{-8}$ 
    & \cellcolor{gray!15}$\mathbf{4.43{\times}10^{-5} \pm 2.35{\times}10^{-9}}$ \\
4 & $8.87{\times}10^{-2} \pm 2.00{\times}10^{-2}$ 
  & $7.82{\times}10^{-2} \pm 1.56{\times}10^{-2}$ 
  & $8.45{\times}10^{-2} \pm 1.85{\times}10^{-2}$ 
  & \cellcolor{gray!15}$\mathbf{4.75{\times}10^{-4} \pm 3.26{\times}10^{-7}}$ \\

\hline
\end{tabular}
}
\caption{Mean $\pm$ variance across different $\lambda$ values. Bold values indicate the minimum (best) result for each row. For the tested $\lambda$ values, the proposed method achieves better results.}
\label{table:homothety}
\end{table}

To gain further insight into these results, we perform an additional simulation. We use the same parameters ($n_s=1000$, $n_t=10$, $\lambda=2$), but first sample $1000$ points and then randomly select $10$ of them to define the target domain $\mathcal{D}^t$. The experiment is then carried out while retaining the full set of $1000$ points to visualize the underlying structure. From the upper panel of Figure~\ref{fig:complete_example}, the yellow line appears to better fit the target data than the green one. However, when the full sample is shown in the lower panel, we observe that the true underlying structure is a horizontal line (consistent with $a=0$), indicating that the green line better captures the global structure.
\begin{figure}[ht!]
    \centering
    \includegraphics[width=1\linewidth]{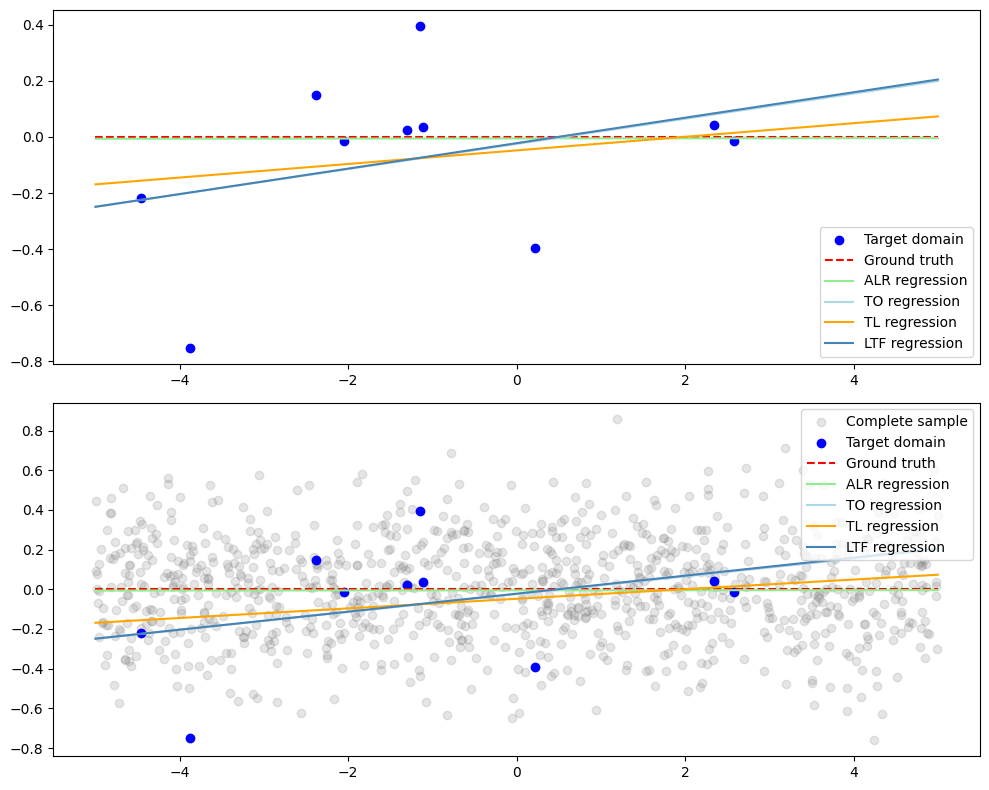}
    \caption{\textbf{Up:} Simulation with $n_s=1000$, $n_t=10$, $\sigma^2=1$, and $\lambda=0.5$. While the TO, LT, and LTF methods approximate the target domain well, our ADL method better captures the underlying structure. \textbf{Down:} The grey points are the complete underlying structure.}
    \label{fig:complete_example}
\end{figure}

\subsection{Limitations}
In the rotation case, we observe experimentally that the proposed method is not effective when the rotation angle $\theta$ is close to $0$ or $\pi$, or when the noise variance $\sigma^2$ is big. For translations, performance deteriorates when the translation norm is large: after applying $K$-means, the resulting sets $\mathcal{D}^k$ and $\mathcal{D}^t$ are not exactly related by a translation in $\textbf{MOV}$, so the segments connecting corresponding points are no longer parallel, and this discrepancy grows with the translation magnitude. In the homothety setting, we do not observe cases where our method performs worse; however, we consider this scenario less relevant in practice, as rotations and translations arise more frequently in real applications.

\bigskip
Another limitation is that the composition of optimal transport maps is generally not optimal. Therefore, if two domains are related by a composition of maps $f_1, \ldots, f_n \in \textbf{MOV}$, repeatedly applying our method does not necessarily yield meaningful results. For example, consider the set $\{(1,0),(2,0)\}$ with $f_1$ a rotation by $\pi$ and $f_2$ a translation by $q=(3,0)$. Then
$$
\{(1,0),(2,0)\} \xrightarrow{f_1} \{(-1,0),(-2,0)\} \xrightarrow{f_2} \{(1,0),(2,0)\},
$$
so that $f_2 \circ f_1 = \mathrm{Id}$ is the optimal transport map, from which neither the rotation nor the translation can be recovered.

\section{Conclusion and perspectives} \label{conclusion}
We propose a geometric approach to domain adaptation for linear regression in $\mathbb{R}^2$, leveraging optimal transport under rotations, translations, and homotheties. By combining OT with $K$-means and simple estimators, we infer the geometric relationship between source and target domains and adapt models using very few target labels. Simulations show consistent median improvements over direct regression (TO), Linear Transfer Learning (LTF) and Trans-Lasso (TL) on scarce target data, particularly with large source samples and high noise, highlighting the potential of OT-based geometric alignment for regression under domain shift. We focused on $\mathbb{R}^2$ to better understand the underlying geometry of the problem. As future work, we plan to explore the extension of these results to $\mathbb{R}^n$, where additional geometric and computational challenges may arise. Possible next steps are discussed below:

\begin{itemize}
    \item \textbf{Understanding outliers and robustness:}  The proposed method shows strong median performance but exhibits significant outliers, likely due to domain transformations that deviate substantially from any map $f \in \textbf{MOV}$. Analyzing these cases is key to developing a more robust algorithm.
    \item \textbf{Generalization:} while we focus on a single global linear transformation, we note that extensions based on piecewise or polynomial models may allow the framework to address more complex mappings; exploring these ideas is left for future work.

    \item \textbf{Generalization to higher dimensions:} although the proposed method shows encouraging results for $\mathbb{R}^2$, real world data generally have dimensions greater than $2$. One possible approach is to use shape analysis techniques (\cite{srivastava2016functional}) to align the target domain with the domain obtained after applying K-means, in order to estimate the transformation between the domains. Then, it would be necessary to study how lemma \ref{relation_between_parameters} change in the $\mathbb{R}^n$ case.

    \item \textbf{Add a hypothesis test}: despite encouraging results, in the propose approach it's necessary to know beforehand which type of transformation $f \in \textbf{MOV}$ related the domains. An interesting next-step is to add a hypothesis test so the algorithm can also estimate which transformation is needed, following ideas considered in shape analysis (\cite{mansour2023domain}, \cite{srivastava2016functional}).

\end{itemize}

These extensions, combined with the demonstrated effectiveness of our method in \( \mathbb{R}^2 \), provide a foundation for future work aimed at addressing more complex domain adaptation challenges in higher dimensions and diverse scenarios.

\bibliographystyle{model5-names}
\bibliography{refs}

\section*{Appendix}
\begin{table}[ht!]
\centering
\small
\setlength{\tabcolsep}{3pt}
\renewcommand{\arraystretch}{1.1}

\resizebox{\columnwidth}{!}{
\begin{tabular}{c|cccc}
\hline
$\sigma^2$ & TO & LFT & TL & ALR \\
\hline

\multicolumn{5}{c}{$\theta=\frac{\pi}{6}$} \\
\hline
0.1 & $6.00{\times}10^{-4}\pm6.00{\times}10^{-4}$ & $5.00{\times}10^{-4}\pm6.00{\times}10^{-4}$ & $1.30{\times}10^{-3}\pm1.20{\times}10^{-3}$ & \cellcolor{gray!15}$\mathbf{4.00{\times}10^{-4}\pm7.00{\times}10^{-4}}$ \\
0.2 & $2.60{\times}10^{-3}\pm2.90{\times}10^{-3}$ & $2.30{\times}10^{-3}\pm2.80{\times}10^{-3}$ & $3.50{\times}10^{-3}\pm3.90{\times}10^{-3}$ & \cellcolor{gray!15}$\mathbf{2.30{\times}10^{-3}\pm3.20{\times}10^{-3}}$ \\
0.5 & $2.08{\times}10^{-2}\pm2.60{\times}10^{-2}$ & \cellcolor{gray!15}$\mathbf{1.81{\times}10^{-2}\pm2.26{\times}10^{-2}}$ & $2.13{\times}10^{-2}\pm2.54{\times}10^{-2}$ & $2.14{\times}10^{-2}\pm4.17{\times}10^{-2}$ \\
1.0 & $6.59{\times}10^{-2}\pm6.67{\times}10^{-2}$ & \cellcolor{gray!15}$\mathbf{6.03{\times}10^{-2}\pm6.24{\times}10^{-2}}$ & $6.92{\times}10^{-2}\pm6.78{\times}10^{-2}$ & $2.32{\times}10^{-1}\pm4.03{\times}10^{-1}$ \\
2.0 & $5.41{\times}10^{-1}\pm3.99{\times}10^{-1}$ & \cellcolor{gray!15}$\mathbf{5.18{\times}10^{-1}\pm3.86{\times}10^{-1}}$ & $5.69{\times}10^{-1}\pm4.11{\times}10^{-1}$ & $3.52{\times}10^{0}\pm1.25{\times}10^{0}$ \\
5.0 & $8.43{\times}10^{0}\pm4.08{\times}10^{0}$ & \cellcolor{gray!15}$\mathbf{8.27{\times}10^{0}\pm4.05{\times}10^{0}}$ & $8.48{\times}10^{0}\pm4.03{\times}10^{0}$ & $1.08{\times}10^{1}\pm1.65{\times}10^{0}$ \\

\hline
\multicolumn{5}{c}{$\theta=\frac{\pi}{4}$} \\
\hline
0.1 & $9.00{\times}10^{-4}\pm9.00{\times}10^{-4}$ & $8.00{\times}10^{-4}\pm7.00{\times}10^{-4}$ & $2.70{\times}10^{-3}\pm2.00{\times}10^{-3}$ & \cellcolor{gray!15}$\mathbf{7.00{\times}10^{-4}\pm9.00{\times}10^{-4}}$ \\
0.2 & $4.70{\times}10^{-3}\pm4.70{\times}10^{-3}$ & \cellcolor{gray!15}$\mathbf{4.40{\times}10^{-3}\pm4.60{\times}10^{-3}}$ & $6.90{\times}10^{-3}\pm7.10{\times}10^{-3}$ & $4.40{\times}10^{-3}\pm6.00{\times}10^{-3}$ \\
0.5 & $3.82{\times}10^{-2}\pm4.62{\times}10^{-2}$ & \cellcolor{gray!15}$\mathbf{3.61{\times}10^{-2}\pm4.61{\times}10^{-2}}$ & $4.67{\times}10^{-2}\pm5.47{\times}10^{-2}$ & $7.79{\times}10^{-2}\pm1.22{\times}10^{-1}$ \\
1.0 & $2.46{\times}10^{-1}\pm2.40{\times}10^{-1}$ & \cellcolor{gray!15}$\mathbf{2.40{\times}10^{-1}\pm2.42{\times}10^{-1}}$ & $2.75{\times}10^{-1}\pm2.58{\times}10^{-1}$ & $1.00{\times}10^{0}\pm6.42{\times}10^{-1}$ \\
2.0 & $1.94{\times}10^{0}\pm1.07{\times}10^{0}$ & \cellcolor{gray!15}$\mathbf{1.91{\times}10^{0}\pm1.07{\times}10^{0}}$ & $2.02{\times}10^{0}\pm1.10{\times}10^{0}$ & $1.25{\times}10^{1}\pm3.60{\times}10^{0}$ \\
5.0 & $2.59{\times}10^{1}\pm8.04{\times}10^{0}$ & \cellcolor{gray!15}$\mathbf{2.57{\times}10^{1}\pm7.88{\times}10^{0}}$ & $2.61{\times}10^{1}\pm7.94{\times}10^{0}$ & $3.45{\times}10^{1}\pm3.89{\times}10^{0}$ \\

\hline
\multicolumn{5}{c}{$\theta=\frac{\pi}{3}$} \\
\hline
0.1 & $4.50{\times}10^{-3}\pm5.40{\times}10^{-3}$ & $4.90{\times}10^{-3}\pm5.70{\times}10^{-3}$ & $1.14{\times}10^{-2}\pm9.40{\times}10^{-3}$ & \cellcolor{gray!15}$\mathbf{3.90{\times}10^{-3}\pm5.20{\times}10^{-3}}$ \\
0.2 & $1.38{\times}10^{-2}\pm1.62{\times}10^{-2}$ & \cellcolor{gray!15}$\mathbf{1.32{\times}10^{-2}\pm1.63{\times}10^{-2}}$ & $2.40{\times}10^{-2}\pm2.65{\times}10^{-2}$ & $1.35{\times}10^{-2}\pm1.88{\times}10^{-2}$ \\
0.5 & $2.24{\times}10^{-1}\pm2.30{\times}10^{-1}$ & \cellcolor{gray!15}$\mathbf{2.21{\times}10^{-1}\pm2.29{\times}10^{-1}}$ & $2.77{\times}10^{-1}\pm2.64{\times}10^{-1}$ & $4.39{\times}10^{-1}\pm5.33{\times}10^{-1}$ \\
1.0 & $1.46{\times}10^{0}\pm1.20{\times}10^{0}$ & \cellcolor{gray!15}$\mathbf{1.45{\times}10^{0}\pm1.20{\times}10^{0}}$ & $1.61{\times}10^{0}\pm1.25{\times}10^{0}$ & $6.25{\times}10^{0}\pm3.43{\times}10^{0}$ \\
2.0 & $1.13{\times}10^{1}\pm5.11{\times}10^{0}$ & \cellcolor{gray!15}$\mathbf{1.12{\times}10^{1}\pm5.01{\times}10^{0}}$ & $1.16{\times}10^{1}\pm5.16{\times}10^{0}$ & $5.69{\times}10^{1}\pm7.58{\times}10^{0}$ \\
5.0 & $8.76{\times}10^{1}\pm1.57{\times}10^{1}$ & \cellcolor{gray!15}$\mathbf{8.74{\times}10^{1}\pm1.56{\times}10^{1}}$ & $8.80{\times}10^{1}\pm1.55{\times}10^{1}$ & $1.03{\times}10^{2}\pm4.89{\times}10^{0}$ \\

\hline
\multicolumn{5}{c}{$\theta=\frac{9}{20}\pi$} \\
\hline
0.1 & \cellcolor{gray!15}$\mathbf{5.43{\times}10^{-1}\pm8.22{\times}10^{-1}}$ & $6.69{\times}10^{1}\pm2.94{\times}10^{1}$ & $1.83{\times}10^{0}\pm1.48{\times}10^{0}$ & $5.90{\times}10^{-1}\pm1.00{\times}10^{0}$ \\
0.2 & \cellcolor{gray!15}$\mathbf{3.41{\times}10^{0}\pm3.70{\times}10^{0}}$ & $7.20{\times}10^{1}\pm2.85{\times}10^{1}$ & $5.92{\times}10^{0}\pm5.37{\times}10^{0}$ & $5.35{\times}10^{0}\pm8.39{\times}10^{0}$ \\
0.5 & \cellcolor{gray!15}$\mathbf{8.38{\times}10^{1}\pm3.63{\times}10^{1}}$ & $1.63{\times}10^{2}\pm4.84{\times}10^{1}$ & $9.43{\times}10^{1}\pm3.84{\times}10^{1}$ & $2.29{\times}10^{2}\pm5.33{\times}10^{1}$ \\
1.0 & \cellcolor{gray!15}$\mathbf{4.20{\times}10^{2}\pm1.10{\times}10^{2}}$ & $4.48{\times}10^{2}\pm1.04{\times}10^{2}$ & $4.34{\times}10^{2}\pm1.11{\times}10^{2}$ & $9.40{\times}10^{2}\pm9.60{\times}10^{1}$ \\
2.0 & \cellcolor{gray!15}$\mathbf{9.72{\times}10^{2}\pm1.36{\times}10^{2}}$ & $9.72{\times}10^{2}\pm1.35{\times}10^{2}$ & $9.80{\times}10^{2}\pm1.36{\times}10^{2}$ & $1.29{\times}10^{3}\pm5.77{\times}10^{1}$ \\
5.0 & $1.35{\times}10^{3}\pm7.62{\times}10^{1}$ & \cellcolor{gray!15}$\mathbf{1.35{\times}10^{3}\pm7.62{\times}10^{1}}$ & $1.35{\times}10^{3}\pm7.47{\times}10^{1}$ & $1.36{\times}10^{3}\pm1.67{\times}10^{1}$ \\

\hline
\multicolumn{5}{c}{$\theta=\frac{11}{20}\pi$} \\
\hline
0.1 & \cellcolor{gray!15}$\mathbf{3.81{\times}10^{-1}\pm5.00{\times}10^{-1}}$ & $5.63{\times}10^{1}\pm2.18{\times}10^{1}$ & $1.28{\times}10^{0}\pm1.19{\times}10^{0}$ & $5.63{\times}10^{-1}\pm7.45{\times}10^{-1}$ \\
0.2 & \cellcolor{gray!15}$\mathbf{5.14{\times}10^{0}\pm4.56{\times}10^{0}}$ & $8.08{\times}10^{1}\pm2.52{\times}10^{1}$ & $8.47{\times}10^{0}\pm5.90{\times}10^{0}$ & $5.99{\times}10^{0}\pm7.76{\times}10^{0}$ \\
0.5 & \cellcolor{gray!15}$\mathbf{7.73{\times}10^{1}\pm3.27{\times}10^{1}}$ & $1.57{\times}10^{2}\pm4.17{\times}10^{1}$ & $8.75{\times}10^{1}\pm3.41{\times}10^{1}$ & $2.26{\times}10^{2}\pm6.60{\times}10^{1}$ \\
1.0 & \cellcolor{gray!15}$\mathbf{4.36{\times}10^{2}\pm8.03{\times}10^{1}}$ & $4.60{\times}10^{2}\pm7.91{\times}10^{1}$ & $4.50{\times}10^{2}\pm8.12{\times}10^{1}$ & $9.16{\times}10^{2}\pm8.24{\times}10^{1}$ \\
2.0 & \cellcolor{gray!15}$\mathbf{9.86{\times}10^{2}\pm1.08{\times}10^{2}}$ & $9.86{\times}10^{2}\pm1.07{\times}10^{2}$ & $9.94{\times}10^{2}\pm1.08{\times}10^{2}$ & $1.29{\times}10^{3}\pm4.02{\times}10^{1}$ \\
5.0 & $1.33{\times}10^{3}\pm6.17{\times}10^{1}$ & \cellcolor{gray!15}$\mathbf{1.33{\times}10^{3}\pm6.15{\times}10^{1}}$ & $1.33{\times}10^{3}\pm6.04{\times}10^{1}$ & $1.36{\times}10^{3}\pm1.65{\times}10^{1}$ \\

\hline
\multicolumn{5}{c}{$\theta=\frac{2\pi}{3}$} \\
\hline
0.1 & $3.40{\times}10^{-3}\pm3.70{\times}10^{-3}$ & $3.40{\times}10^{-3}\pm3.90{\times}10^{-3}$ & $1.02{\times}10^{-2}\pm9.00{\times}10^{-3}$ & \cellcolor{gray!15}$\mathbf{2.80{\times}10^{-3}\pm3.20{\times}10^{-3}}$ \\
0.2 & $1.64{\times}10^{-2}\pm1.79{\times}10^{-2}$ & $1.54{\times}10^{-2}\pm1.71{\times}10^{-2}$ & $2.43{\times}10^{-2}\pm2.09{\times}10^{-2}$ & \cellcolor{gray!15}$\mathbf{1.49{\times}10^{-2}\pm1.63{\times}10^{-2}}$ \\
0.5 & $1.79{\times}10^{-1}\pm1.79{\times}10^{-1}$ & \cellcolor{gray!15}$\mathbf{1.77{\times}10^{-1}\pm1.80{\times}10^{-1}}$ & $2.23{\times}10^{-1}\pm2.14{\times}10^{-1}$ & $3.69{\times}10^{-1}\pm3.93{\times}10^{-1}$ \\
1.0 & $1.54{\times}10^{0}\pm1.03{\times}10^{0}$ & \cellcolor{gray!15}$\mathbf{1.53{\times}10^{0}\pm1.03{\times}10^{0}}$ & $1.69{\times}10^{0}\pm1.08{\times}10^{0}$ & $7.25{\times}10^{0}\pm3.27{\times}10^{0}$ \\
2.0 & $1.32{\times}10^{1}\pm5.08{\times}10^{0}$ & \cellcolor{gray!15}$\mathbf{1.31{\times}10^{1}\pm5.07{\times}10^{0}}$ & $1.35{\times}10^{1}\pm5.15{\times}10^{0}$ & $6.03{\times}10^{1}\pm8.68{\times}10^{0}$ \\
5.0 & $8.93{\times}10^{1}\pm1.50{\times}10^{1}$ & \cellcolor{gray!15}$\mathbf{8.92{\times}10^{1}\pm1.50{\times}10^{1}}$ & $8.96{\times}10^{1}\pm1.48{\times}10^{1}$ & $1.04{\times}10^{2}\pm4.57{\times}10^{0}$ \\

\hline
\multicolumn{5}{c}{$\theta=\frac{3\pi}{4}$} \\
\hline
0.1 & $1.10{\times}10^{-3}\pm1.30{\times}10^{-3}$ & $1.10{\times}10^{-3}\pm1.30{\times}10^{-3}$ & $2.80{\times}10^{-3}\pm2.80{\times}10^{-3}$ & \cellcolor{gray!15}$\mathbf{9.00{\times}10^{-4}\pm1.20{\times}10^{-3}}$ \\
0.2 & $4.60{\times}10^{-3}\pm5.50{\times}10^{-3}$ & $4.20{\times}10^{-3}\pm5.40{\times}10^{-3}$ & $7.20{\times}10^{-3}\pm8.20{\times}10^{-3}$ & \cellcolor{gray!15}$\mathbf{3.90{\times}10^{-3}\pm6.10{\times}10^{-3}}$ \\
0.5 & $3.95{\times}10^{-2}\pm4.84{\times}10^{-2}$ & \cellcolor{gray!15}$\mathbf{3.68{\times}10^{-2}\pm4.50{\times}10^{-2}}$ & $4.86{\times}10^{-2}\pm5.75{\times}10^{-2}$ & $7.08{\times}10^{-2}\pm1.23{\times}10^{-1}$ \\
1.0 & $1.69{\times}10^{-1}\pm1.25{\times}10^{-1}$ & \cellcolor{gray!15}$\mathbf{1.59{\times}10^{-1}\pm1.23{\times}10^{-1}}$ & $1.90{\times}10^{-1}\pm1.36{\times}10^{-1}$ & $8.11{\times}10^{-1}\pm7.21{\times}10^{-1}$ \\
2.0 & $1.83{\times}10^{0}\pm1.20{\times}10^{0}$ & \cellcolor{gray!15}$\mathbf{1.80{\times}10^{0}\pm1.19{\times}10^{0}}$ & $1.91{\times}10^{0}\pm1.23{\times}10^{0}$ & $1.16{\times}10^{1}\pm2.61{\times}10^{0}$ \\
5.0 & $2.61{\times}10^{1}\pm8.40{\times}10^{0}$ & \cellcolor{gray!15}$\mathbf{2.60{\times}10^{1}\pm8.41{\times}10^{0}}$ & $2.62{\times}10^{1}\pm8.26{\times}10^{0}$ & $3.33{\times}10^{1}\pm3.28{\times}10^{0}$ \\

\hline
\multicolumn{5}{c}{$\theta=\frac{5\pi}{6}$} \\
\hline
0.1 & $6.00{\times}10^{-4}\pm6.00{\times}10^{-4}$ & $6.00{\times}10^{-4}\pm6.00{\times}10^{-4}$ & $1.20{\times}10^{-3}\pm1.20{\times}10^{-3}$ & \cellcolor{gray!15}$\mathbf{4.00{\times}10^{-4}\pm6.00{\times}10^{-4}}$ \\
0.2 & $2.70{\times}10^{-3}\pm2.60{\times}10^{-3}$ & $2.40{\times}10^{-3}\pm2.30{\times}10^{-3}$ & $3.70{\times}10^{-3}\pm3.80{\times}10^{-3}$ & \cellcolor{gray!15}$\mathbf{2.20{\times}10^{-3}\pm2.90{\times}10^{-3}}$ \\
0.5 & $1.92{\times}10^{-2}\pm1.80{\times}10^{-2}$ & \cellcolor{gray!15}$\mathbf{1.74{\times}10^{-2}\pm1.68{\times}10^{-2}}$ & $2.20{\times}10^{-2}\pm2.05{\times}10^{-2}$ & $2.38{\times}10^{-2}\pm4.10{\times}10^{-2}$ \\
1.0 & $1.01{\times}10^{-1}\pm1.22{\times}10^{-1}$ & \cellcolor{gray!15}$\mathbf{9.32{\times}10^{-2}\pm1.19{\times}10^{-1}}$ & $1.11{\times}10^{-1}\pm1.31{\times}10^{-1}$ & $2.66{\times}10^{-1}\pm2.59{\times}10^{-1}$ \\
2.0 & $5.63{\times}10^{-1}\pm5.47{\times}10^{-1}$ & \cellcolor{gray!15}$\mathbf{5.42{\times}10^{-1}\pm5.48{\times}10^{-1}}$ & $5.88{\times}10^{-1}\pm5.64{\times}10^{-1}$ & $3.09{\times}10^{0}\pm1.33{\times}10^{0}$ \\
5.0 & $8.24{\times}10^{0}\pm4.71{\times}10^{0}$ & \cellcolor{gray!15}$\mathbf{8.06{\times}10^{0}\pm4.63{\times}10^{0}}$ & $8.28{\times}10^{0}\pm4.64{\times}10^{0}$ & $1.12{\times}10^{1}\pm1.49{\times}10^{0}$ \\
\hline
\end{tabular}
}

\caption{Mean $\pm$ standard deviation of MSE across all regimes. Bold values with gray background indicate the best (lowest mean MSE) method for each noise level.}
\label{tab:full_results}
\end{table}

\end{document}